\DeclareRobustCommand{\DLLogo}{%
  \begingroup\normalfont
  \kern-1.75pt\includegraphics[align=c,height=1.25\baselineskip]{dl}\kern-1.5pt%
  \endgroup
}
\newtheorem{theorem}{Theorem}
\newtheorem{lemma}{Lemma}
\newtheorem*{claim}{Claim}
\newcommand{\ttodo}[4]{\ifthenelse{\equal{#1}{inline}}{\todo[inline,author=#2,color=#3]{#4}}{\todo[color=#3]{#2: #4}}}
\def\define#1#2#3%
\renewcommand*{\do}[1]{%
 \expandafter\newcommand\csname
 #1\endcsname{#2}
}
\newcommand{\LetheAbduction}{\textsc{Lethe-Abduction}\xspace}
\newcommand{\wrt}{w.r.t.\ }
\newcommand{\ie}{i.e.\ }
\newcommand{\eg}{e.g.\ }
\newcommand{\cn}[1]{\textsf{#1}}
\newcommand{\ALCOImu}{\ensuremath{\ALCOI\mu}\xspace}
\newsavebox{\twosubbox}
\newcommand{\OverviewPic}{
\begin{figure}

\includegraphics[width=\textwidth]{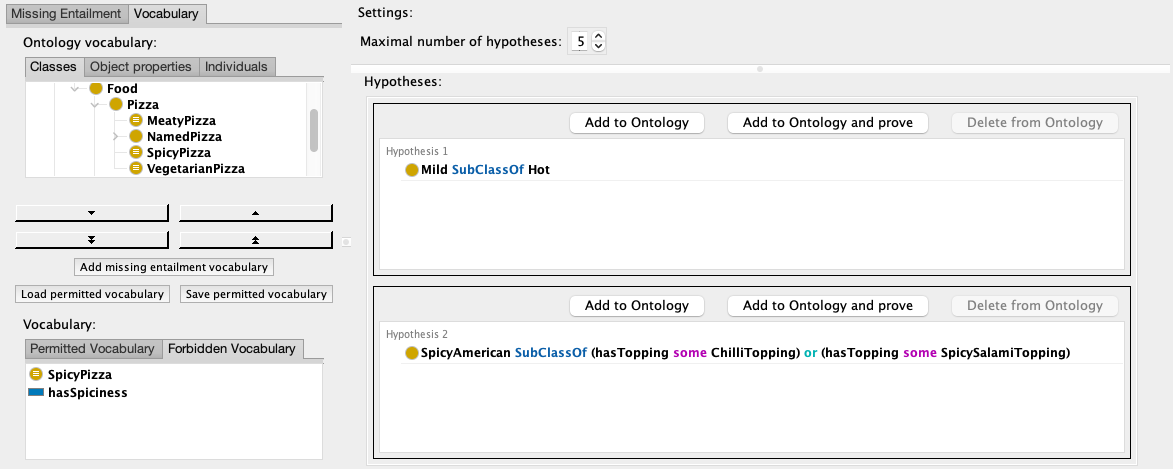}
\caption{Abduction results from \Lethe for $\emph{SpicyAmerican}\sqsubseteq\emph{SpicyPizza}$ with forbidden symbols \emph{SpicyPizza} and \emph{hasSpiciness}.}
\label{Figure:OverviewPic}

\end{figure}

}
\newcommand{\LetheResultPicture}{
	\begin{figure}
		\includegraphics[width=\textwidth]{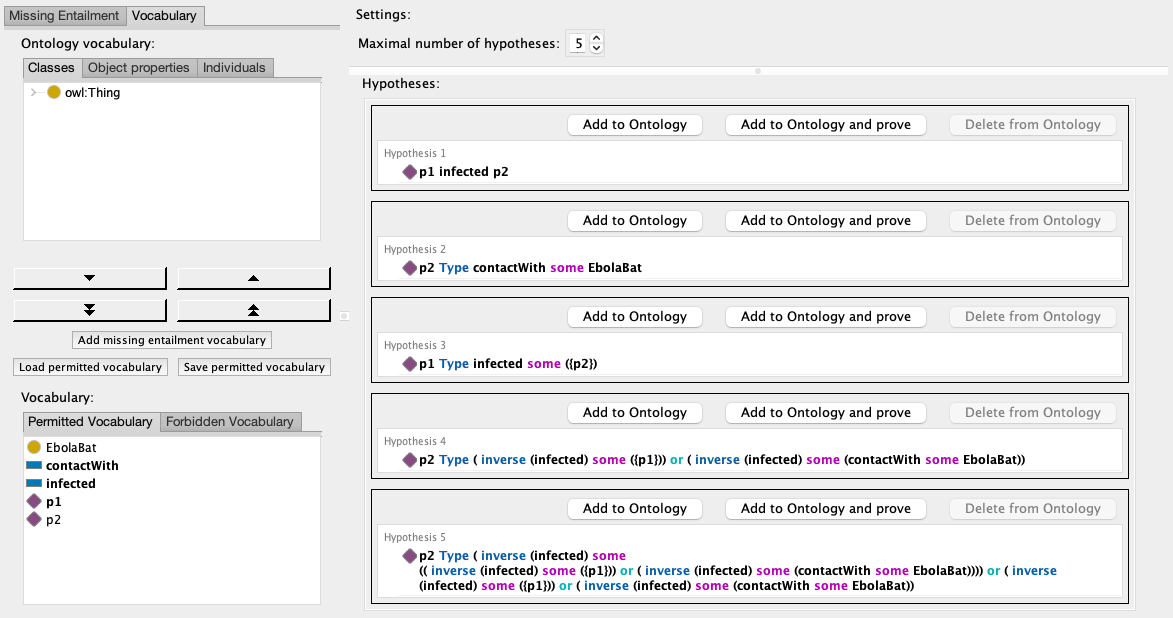}
		\centering
		\caption{An abduction result giving possible explanations for why patient
		\emph{p2} is an \emph{EbolaPatient}. This is based on the example used
		in~\cite{DBLP:conf/kr/KoopmannDTS20}.}
		\label{Figure:LetheResultPicture}
	\end{figure}
}
\newcommand{\nonent}{non-entailment\xspace}
\newcommand{\MisEnt}{Missing Entailment\xspace}
\newcommand{\misent}{missing entailment\xspace}
\newcommand{\Sig}{Vocabulary\xspace}
\newcommand{\sig}{vocabulary\xspace}
\newcommand{\obs}{missing entailment\xspace}
\newcommand{\explanationServiceInterface}{INonEntailmentExplanationService\xspace}
\newcommand{\explanationGenerationListener}{IExplanationGenerationListener\xspace}
\newcommand{\explainerInterface}{IOWLNonEntailmentExplainer\xspace}
\newcommand{\OBS}{\ensuremath{\Pmc}\xspace}
\newcommand{\evee}{Evee\xspace}
\newcommand{\plugin}{plug-in\xspace}
\newcommand{\plugins}{plug-ins\xspace}
\newcommand{\Protege}{Prot{\'e}g{\'e}\xspace}
\newcommand{\nonentAxiom}{\ensuremath{\eta}\xspace}
\newcommand{\arev}{\textsf{$\alpha$}\xspace}
\newcommand{\brev}{\textsf{$\beta$}\xspace}
\newcommand{\crev}{\textsf{$\Delta$}\xspace}
\newcommand{\drev}{\textsf{$\bar{\Delta}$}\xspace}
\newcommand{\highlighttt}[1]{
    \vspace{-7pt}
    \begin{center}
        \texttt{#1}
    \end{center}
    \vspace{-7pt}
}
\newcommand{\ce}{counterexample\xspace}
\newcommand{\Ce}{Counterexample\xspace}
\newcommand{\cegen}{counterexample generation\xspace}
\newcommand{\addDisj}{Add disjointnesses\xspace}
\newcommand{\recompute}{Recompute example\xspace}
\newcommand{\removeDisj}{Remove disjointnesses\xspace}
\newcommand{\addToOnt}{Add all to ontology\xspace}
\newcommand{\subclass}{SpicyAmerican\xspace}
\newcommand{\superclass}{SpicyPizza\xspace}
\begin{document}

\title{Why Not? Explaining Missing Entailments with \Evee (Technical Report)}

\author[1]{Christian Alrabbaa \orcidlink{0000-0002-2925-1765}}
\author[1]{Stefan Borgwardt \orcidlink{0000-0003-0924-8478}}
\author[1]{Tom Friese}
\author[2]{Patrick~Koopmann~\orcidlink{0000-0001-5999-2583}}
\author[1]{Mikhail Kotlov}
\affil[1]{Institute of Theoretical Computer Science, TU Dresden, Germany}
\affil[ ]{\nolinkurl{firstname.lastname@tu-dresden.de}}
\affil[2]{Department of Computer Science, VU Amsterdam, Netherlands}
\affil[ ]{\nolinkurl{p.k.koopmann@vu.nl}}

\date{}

\maketitle

\begin{abstract}
  Understanding logical entailments derived by a description logic reasoner is not always straight-forward
  for ontology users.
  For this reason, various methods for explaining entailments using justifications and proofs have been developed and implemented as \plugins for the ontology editor \Protege.
  However, when the user expects a missing consequence to hold, it is equally important to
  explain why it does not follow from the ontology.
  In this paper, we describe a new version of \Evee, a \Protege plugin that now also provides explanations for missing consequences, via existing and new techniques based on abduction and counterexamples.
\end{abstract}

\section{Introduction}

We present a Prot\'eg\'e plugin for explaining missing entailments from OWL ontologies.
The importance of explaining description logic reasoning to end-users has long been understood,
and has been studied in many forms over the past decades. Indeed, explainability is one of the main advantages of logic-based knowledge representations over sub-symbolic methods.
The first approaches to explain \emph{why} a consequence follows from a Description Logic (DL) ontology were based on step-by-step
\emph{proofs}~\cite{DBLP:conf/ecai/BorgidaFH00,DeMc-96}, but soon research focused on \emph{justifications}~\cite{DBLP:conf/ki/BaaderPS07,Horr-11,DBLP:conf/ijcai/SchlobachC03} that are easier to compute, but still very useful for pointing out the axioms responsible for an entailment.
Consequently, the ontology editor \Protege supports black-box methods for computing justifications for arbitrary OWL DL ontologies~\cite{DBLP:conf/sum/HorridgePS09}.
More recently, a series of papers investigated different methods of computing good proofs for entailments in DLs ranging from \EL to \ALCOI~\cite{DBLP:conf/dlog/KazakovKS17,DBLP:conf/lpar/AlrabbaaBBKK20,DBLP:conf/dlog/AlrabbaaBBKK20,DBLP:conf/cade/AlrabbaaBBKK21}, and the Protégé \plugins \texttt{proof-explanation}~\cite{DBLP:conf/dlog/KazakovKS17}
and \Evee~\cite{DBLP:conf/dlog/AlrabbaaBFK0P22},
as well as the web-based application~\Evonne~\cite{EVONNE},
were developed to make these algorithms available to ontology engineers.

While reasoning can sometimes reveal unexpected entailments that need explaining, very often the problem is not what is entailed, but what is \emph{not} entailed. In order to explain such missing entailments, and offer suggestions on how to repair them, both counterexamples and abduction have been suggested in the literature. A \emph{counterexample} is a model of the ontology that does not satisfy the entailment, which may be further augmented to focus the attention of the user to the part of the model that is most relevant for explaining the non-entailment~\cite{DBLP:conf/jist/AlrabbaaH22}. In \emph{abduction}, the non-entailment is explained by means of \emph{hypotheses}, which are sets of axioms that can be added to the ontology in order to entail the missing consequence~\cite{DBLP:conf/kr/KoopmannDTS20,DBLP:conf/ijcai/Koopmann21,Haifani2022CAPI}.
However, despite there being a lot of research on these explanation services of both theoretical and more practical form, so far, the tool 
support has not been integrated into standard ontology tools.

In this paper, we present version~0.2 of \Evee, a collection of plugins for the OWL ontology editor \Protege, which now also offers explanations for missing entailments.
Those plugins integrate the functionality provided by the external tools \Capi and \LetheAbduction for abduction, as well as the counterexample generation methods discussed in~\cite{DBLP:conf/jist/AlrabbaaH22}.
The explanations are provided by \Evee through a new \emph{\MisEnt Explanation} tab that contains a unified interface for explanations based on both counterexamples and abduction.
After specifying the missing entailment(s) and optionally a vocabulary for the explanation, the user can choose between different non-entailment explanation algorithms, which then provide either a graphical representation of a counterexample, or a list of different hypotheses to fix the missing entailments.
\Evee~0.2 has been tested with Java~8, OWL API 4.5.20, and \Protege 5.5.0, and can be downloaded and installed following the instructions at \url{https://github.com/de-tu-dresden-inf-lat/evee}.
The new plugins
depend on the external libraries \Capi,\footnote{\url{https://lat.inf.tu-dresden.de/~koopmann/CAPI}} \Spass,\footnote{\url{https://www.mpi-inf.mpg.de/departments/automation-of-logic/software/spass-workbench/classic-spass-theorem-prover}} and \LetheAbduction.\footnote{\url{https://lat.inf.tu-dresden.de/~koopmann/LETHE-Abduction}}

We describe the general interface of the new \MisEnt Explanation tab of \Evee in the next section, before explaining in detail the different 
explanation services and how they are accessed through the user interface. \Evee provides an infrastructure that makes it convenient for 
developers to develop new plugins based on their own methods for abduction or counterinterpretations. In 
Section~\ref{Section:DeveloperPerspective}, we explain how developers can use this infrastructure to provide new explanation services 
for missing explanations.

\section{Explanations for Non-Entailments}\label{Section:CorePlugin}

We assume the reader to be familiar with the syntax and semantics of DLs~\cite{book_introDL}.
The use case of our plugins is the following:
we have an active ontology~\Omc opened in the ontology editor \Protege, and there is a set of axioms~$\OBS$ that does not follow from~\Omc, \ie $\Omc\not\models\OBS$. The user may also specify a vocabulary $\Sigma$ to be used for the explanations, which is in particular useful for the abduction services.
The \emph{evee-protege-core} component provides the core functionality to specify~$\OBS$ and~$\Sigma$ and extension points for the actual explanation \plugins.
After installing the core plugin, a new tab is available via \emph{Window $\rightarrow$ Tabs $\rightarrow$ \MisEnt Explanation.}

\begin{figure}[t]
    \begin{center}
         \includegraphics[width=\textwidth]{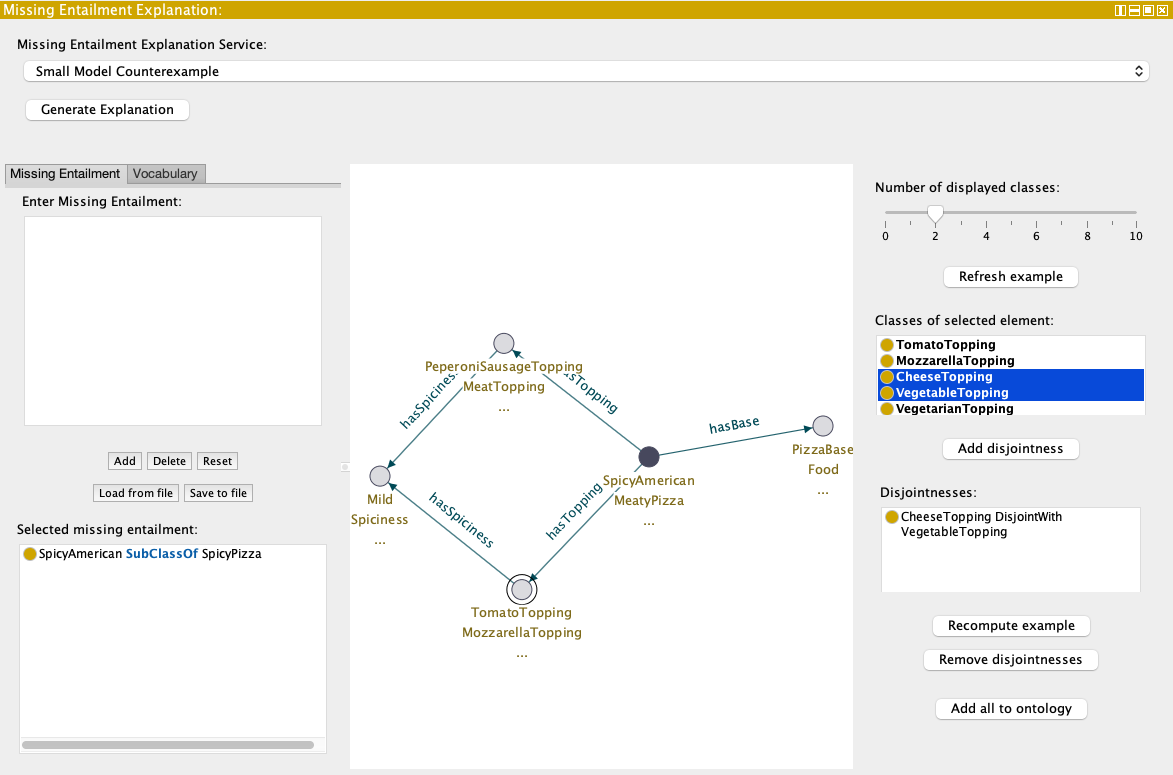}
        \caption{A \ce generated for \emph{\subclass} $\sqsubseteq $ \emph{\superclass} in an incomplete version of the pizza ontology. The example illustrates two problems with the ontology: The circled element at the bottom shows that \emph{MozarellaTopping} and \emph{TomatoTopping}, as well as \emph{CheeseTopping} and \emph{VegetableTopping} are not disjoint from one another.}
        \label{fig:ce}
    \end{center}
\end{figure}

Figure~\ref{fig:ce} shows this tab in action.
It is divided into three major parts:
In the upper part, one of the installed \emph{\misent explanation services} can be chosen, the computation process can be started, and general information is displayed.
On the left, the \obs and \sig can be entered, and in the center, the explanation will be displayed.
The explanation view depends on the selected explanation service (see Sections~\ref{Section:Counterexamples} and~\ref{Section:Abduction}).
If the entered \obs and \sig are not supported by the service,
the \emph{Generate Explanation} button at the top will be disabled and an explanatory message will be shown.
How missing entailments are entered can be see on the left in Figure~\ref{fig:ce}.
The text field at the top can be used to enter individual axioms.
The buttons in the middle allow the user to \emph{add} an axiom to the list below, \emph{remove} a selected axiom, or \emph{reset} the whole list.
Only \emph{OWL logical axioms} are allowed, \eg subclass-, equivalence-, and disjointness axioms and assertions.
The selected \misent can also be saved to or loaded from an OWL ontology file, which can be useful for demonstration purposes.

By selecting the \emph{\Sig} tab, the users can restrict the vocabulary used by the explanations, which can be seen in Figure~\ref{Figure:OverviewPic}.
Here, the \emph{Ontology \Sig} of the currently active ontology can be accessed via the class hierarchy, object property hierarchy, and list of individuals.
The \sig of the explanation will be restricted to the names in the tab \emph{Permitted \Sig} on the bottom, while \emph{Forbidden \Sig} shows the remaining names of the ontology \sig.
Depending on the currently opened tabs, the arrows and the button \emph{Add missing entailment \sig} can be used to add or remove names to and from the selected \sig.
Again, the permitted \sig can be saved to and loaded from an external file.
By default, the whole ontology \sig is \emph{permitted}, but this can be changed in the \plugin preferences at \emph{Preferences $\rightarrow$ Explanations $\rightarrow$ \MisEnt $\rightarrow$ General}.

While the explanation is generated, a progress window is used to indicate the computation status and show additional information.
The computation can be canceled by closing the window or clicking the \emph{Cancel} button.
This will show a separate cancelation window while the computation is being terminated.

As a running example to illustrate the different explanation services, we consider a modified, incomplete version of the
Pizza ontology.\footnote{\url{http://protege.stanford.edu/ontologies/pizza/pizza.owl}} This version is missing some axioms to make it entail $\emph{\subclass}\sqsubseteq\emph{\superclass}$. It will turn out that other things are missing in this ontology as well, and the plugin will help the user in adding those missing parts.

\section{Counterexamples}\label{Section:Counterexamples}

The first obvious way to explain the missing entailment is to show an example of a \subclass that is not a \superclass, as shown in Figure~\ref{fig:ce}.
\evee 2.0 includes two \plugins that provide \cegen services: the \emph{Small Model \Ce~Generator} and
 the \emph{Relevant \Ce~Generator} using \Elk~\cite{DBLP:journals/jar/KazakovKS14}. %
For a missing entailment $C\sqsubseteq D$, a counterexample is a model of the ontology that contains an element
that belongs to~$C$, but does not belong to~$D$.
The \plugins visualize counterexamples as directed graphs, where the nodes are individuals labeled by concept names
and the edges are labeled by role names.
The Small Model \Ce Generator is developed for $\mathcal{EL}_\bot$, which supports disjointness axioms.
This generator generates complete models, but tries to reduce the number of elements to keep the model small.
The ELK Relevant \Ce~Generator instead focuses on relevant fragments of models, using the methods described in~\cite{DBLP:conf/jist/AlrabbaaH22}. It was developed for the description logic $\mathcal{EL}$.
Both generators require a single GCI to be entered as non-entailment, and support arbitrary vocabularies~$\Sigma$.
Since we only explain non-entailed GCIs, the counterexample generators ignore the ABox of the active ontology, and only consider the TBox.
The generated counterexample only shows names from~$\Sigma$. We first describe the counterexample view, before explaining the different methods in detail.
To visualize the counterexamples, we use GraphStream,\footnote{\url{https://graphstream-project.org}}
a Java library for modeling, analyzing and visualizing graphs.
Its functionality allows not only to visualize models, but also to dynamically make changes to models when the ontologies change.
In the generated graphs, domain elements are depicted as circles.
We highlight elements that are of particular importance for understanding the generated model. For instance, each counterexample contains a \emph{root element}, marked in black, which satisfies the concept on the left-hand side of the GCI to be explained.
 For readability, only some of the concept names for each domain element are shown, whose number can be adapted using the \emph{Number of displayed classes} slider on the right panel of the counterinterpretation view.
When the user selects a node, the \emph{Classes of selected element} list in the right panel displays all the concept names to which the selected element belongs. In the node selected in Figure~\ref{fig:ce}, the user notices in this way an element that is both a \emph{TomatoTopping} and a \emph{MozzarellaTopping}, pointing at another bug in the ontology---but more on this later.

The graphical model view allows zoom to facilitate exploring large graphs, and users can move the nodes of the graph.
Dragging the mouse over the background canvas navigates through the graph.
To make the graphical representation of a \ce more informative, we display concept names in the order from more specific to less specific.
In Figure~\ref{fig:ce}, we display that the root element belongs only to \emph{\subclass} and \emph{MeatyPizza}, but it implicitly is also a \emph{Pizza}, a \emph{Food}, and ultimately a \emph{DomainThing}, since the first two classes are subsumed by them.
However, if we instead displayed in the graph that this element is a \emph{DomainThing}, we would give no useful information about the element.

As in the present example, the visualized model can also reveal missing disjointness axioms in the ontology.
As already noticed,
in Figure~\ref{fig:ce}, the selected element is both a
\emph{MozzarellaTopping} and a \emph{TomatoTopping}.
The reason is a missing disjointness between \emph{CheeseTopping} and \emph{VegetableTopping}.
The right panel allows the user to add new disjointness axioms as needed and visualize the result.
For this, the user selects the corresponding concept names in the \emph{Classes of selected element} list and presses \emph{\addDisj}. A 
disjointness axiom with the selected names is then added to the \emph{Disjointnesses} list, as shown in the figure.
By pressing the \emph{\recompute} button, the user gets shown an updated model with the new disjointness applied.
If the user is not satisfied with the changes to the model resulting from the new axioms, they can be deleted using the \emph{\removeDisj} button.
Finally, axioms from the \emph{Disjointnesses} list can be added to the active ontology with a click of the \emph{\addToOnt} button.

\subsection{Small Model \Ce~Generator}\label{Section:Counterexamples:smallModel}

In this explanation service, \ce{}s are generated using a tableau-based algorithm for the description logic $\mathcal{EL}_\bot$.
Given a GCI $C\sqsubseteq D$,
 the algorithm first initializes an ABox $\mathcal{A}$ containing as only axiom $C(a^*)$, where $a^*$ is a fresh individual name.
Next, it adds $D \sqsubseteq B^*$ to the TBox, where $B^*$ is a fresh concept name, and normalizes~\cite{book_introDL} the TBox.
Note that $\mathcal{T}\models C \sqsubseteq D$ iff $\mathcal{T}\cup \{D\sqsubseteq B^*\}\models C \sqsubseteq B^*$.
Thus, if $a^{*\mathcal{I}}\not\in B^{*\mathcal{I}}$ in the generated model $\mathcal{I}$, then~$a^{*\mathcal{I}}\not\in D^\mathcal{I}$.
Therefore, the generated model is a \ce iff $a^{*\mathcal{I}}\not\in B^{*\mathcal{I}}$~\cite{book_introDL}.

The model of the ontology is obtained using a complete and clash-free ABox $\mathcal{A'}$ obtained from the ABox $\mathcal{A}$ by an exhaustive application of the expansion rules from Table \ref{t:rules}.
The $\sqsubseteq$-rule is almost identical to the similar rule in algorithms for $\mathcal{ALC}$.
The only difference is that it takes into account the structure of the normalized TBox.
It becomes applicable only to concept assertions with concept names or with a concept name under an existential restriction.
The $\sqcap$- and $\exists_1$-rules are designed to add assertions that make the $\sqsubseteq$-rule applicable.
For individual names having a successor belonging to some concept name, the $\exists_1$-rule creates a concept assertion with this concept name under an existential restriction.
The $\sqcap$-rule breaks conjunctions into simpler assertions.

To keep the model small, we reuse existing individuals as successors when trying to satisfy existential role restrictions.
Before reusing an individual name, a consistency check is performed, so that the rule cannot introduce any inconsistency.
Moreover, we only reuse an individual as successor if this does not make the root element an instance of $B^*$, since the aim is to construct a counterexample for $C\sqsubseteq B^*$.

The expansion rules are applied exhaustively, but the $\exists_2$-rule is applied only if no other rule is applicable.
This restriction reduces the number of applications of the $\exists_2$-rule, and consequently the number of individuals added.
The algorithm %
iteratively applies the rules
until no more rule is applicable, and then translates the resulting ABox into an interpretation in the usual way.
The correctness of the algorithm is shown in the appendix.

\begin{table}%
    \centering
    \caption{Expansion rules of the tableau method generating small counterexamples for $\EL_\bot$. Here, At($D$) refers to the conjuncts of the concept $D$, or to the singleton set $\{D\}$ if $D$ is not a conjunction.
    }\label{t:rules}
\begin{tabular}{p{0.14\textwidth}|p{0.74\textwidth}}

 \textbf{$\sqcap$-rule} & \textbf{if} $\mathcal{A}$ contains $D(a)$,  but not $C(a)$,  $C \in$ At($D$) \textbf{then}  $\mathcal{A} \longrightarrow \mathcal{A} \cup\{a:C\}$ \\
\hline
 \textbf{$\exists_{1}$-rule} & \textbf{if} $\mathcal{A}$ contains $r(a,b)$ and $A(b)$, $A$ is a concept name or $\top$,  but not $\exists r.A(a)$  $\mathcal{A} \longrightarrow \mathcal{A} \cup\{\exists r.A(a)\}$ \\
\hline
 \textbf{$\exists_{2}$-rule} & \textbf{if} $\mathcal{A}$ contains $\exists r.E(a)$, but there is no $b$ s.t. $r(a,b)$ and $E(b)$

~~\textbf{if} there is some $c$, s.t. $ \mathcal{T}\cap \mathcal{A} \cup\{r(a,c),E(c)\}$ is consistent and does not entail $B^*(a^*)$
\textbf{then} $\mathcal{A} \longrightarrow\mathcal{A} \cup\{r(a,c),E(c)\}$

 ~~\textbf{else} $\mathcal{A} \longrightarrow\mathcal{A} \cup\{r(a,d),E(d), \top(d)\}$, where $d$ is new in $\mathcal{A}$
 \\
\hline
 \textbf{The $\sqsubseteq$-rule} & \textbf{if} $A(a)\in \mathcal{A}$, $A \sqsubseteq B\in \mathcal{T}$ or $\{A_{1}(a),A_{2}(a)\}\subseteq \mathcal{A}$, $A_{1} \sqcap A_{2} \sqsubseteq B\in \mathcal{T}$ or $\exists r.A(a)\in \mathcal{A}$, $\exists r.A \sqsubseteq B\in \mathcal{T}$ and $B(a)\not\in\Amc$.

\textbf{then} $\mathcal{A} \longrightarrow \mathcal{A} \cup\{B(a)\}$   \\

\end{tabular}

\end{table}

\subsection{Relevant \Ce~Generator}
A more focussed explanation to missing entailments is provided by the 
\emph{Relevant \Ce~Generator}. \emph{Relevant counterexamples} explain
missing entailments by showing relevant parts of the models of \EL ontologies, 
where this time \emph{canonical models}~\cite{book_introDL} are used.
Canonical models have two properties that are beneficial for
explanations. First, they reuse domain elements, \ie when a concept $C$
appears multiple times in a TBox~\Tmc, the substructure of the canonical model
$\Imc_\Tmc$ satisfying~$C$ is reused. This makes
$\Imc_\Tmc$ a compact interpretation. Second, for any \nonent
$\nonentAxiom$, $\Tmc \not\models \nonentAxiom$ iff $\Imc_\Tmc
\not\models \nonentAxiom$, and hence ~$\Imc_\Tmc$ directly serves as a counterexample.
However, the size of these models can still be large. To overcome this, we focus
on certain parts of the model, since in general not the entire model is relevant for
the explanation of the current~\nonentAxiom.

We distinguish four types of relevance as shown
in~\cite{DBLP:conf/jist/AlrabbaaH22},
which define the \emph{\arev-, \brev-, \crev-}, and \emph{\drev-relevant 
parts} of $\Imc_\Tmc$.
One possible
explanation for $\Tmc\not\models C\sqsubseteq D$ is to show the user an element that satisfies~$C$, does not satisfy $D$, and satisfies
all axioms in \Tmc. This element serves as a \emph{witness} for the
\nonent, and together with its required successors
forms the \emph{\arev-relevant part} of the canonical model.
Another possibility is to contrast $C$ with $D$, by including also a representative element satisfying~$D$, which gives rise to the
\emph{\brev-relevant part} of the canonical~model.

The \crev-relevant part is a refinement of the \brev-relevant part that focuses on the conditions that are imposed by the ontology on~$D$, but not on~$C$.
This allows for an \enquote{explanation by contradiction} as follows. If $\Tmc \models C \sqsubseteq D$, then
every subsumer~$E$ of $D$ must also subsume~$C$. However, there is
a model of \Tmc (the canonical model) in which there is an element of $C$ that 
intuitively does not satisfy some condition~$E$ that is satisfied by every element 
of $D$. Hence,~$C$ cannot be subsumed by $D$ \wrt \Tmc. Therefore, the
\crev-relevant part contains only those elements illustrating the contrasting
conditions~$E$, \eg $r$-successors (not) satisfying~$F$ in case that
$E=\exists r.F$.
The \drev-relevant part is a further refinement of the \crev-relevant part that tries to generalize these conditions $E$. For example, if $\Tmc \models D
\sqsubseteq E$, $\Tmc \not \models C \sqsubseteq E$ and $E = \exists
r.\exists r.\exists r.F$, then it is sufficient to consider $\exists
r.\top$ instead of $E$, assuming  that $\Tmc \not \models C \sqsubseteq
\exists r.\top$.
For more details we refer the reader to~\cite{DBLP:conf/jist/AlrabbaaH22}.

\newcommand{\lethe}{LETHE\xspace}
\newcommand{\spass}{SPASS\xspace}
\newcommand{\capi}{CAPI\xspace}

\OverviewPic

\section{Abduction}\label{Section:Abduction}

Counterexamples always focus only on one model, and they do not necessarily make it obvious what needs to be done to fix a missing entailment. This is where the explanation services based on abduction come into play.
For our running example, we show an explanation based on abduction in Figure~\ref{Figure:OverviewPic}.
Evee~0.2 includes two \plugins %
based on \emph{abduction}, namely the \emph{Complete Signature-Based Abduction solver} based on \Lethe~\cite{DBLP:journals/ki/Koopmann20}%
\footnote{
 \url{https://lat.inf.tu-dresden.de/~koopmann/LETHE-Abduction}}
and the \emph{Connection-Minimal Abduction solver} utilizing \capi~\cite{Haifani2022CAPI}.\footnote{
\url{https://lat.inf.tu-dresden.de/~koopmann/CAPI}}
Given a \nonent{} $\Omc\not\models\OBS$,
abduction computes a set of \emph{hypotheses}~$\Hmc$, which are sets of axioms such that $\Omc\cup\Hmc\models\OBS$.
Without further restrictions, $\OBS$ is already a hypothesis, which
is why usually additional constraints on the solution space are given. \emph{Signature-based abduction}~\cite{DBLP:conf/ijcai/Koopmann21} relies on a user-given vocabulary. The signature-based abduction service computes a set of alternative hypotheses using only names from the vocabulary, such that any other such hypothesis can be obtained by strengthening
or combining those hypotheses.  In contrast, \capi computes hypotheses satisfying a minimality criterion called \emph{connection-minimality}~\cite{Haifani2022CAPI}, with the aim of focussing on those hypotheses that have a more direct connection to the observation. The signature-based explanations support the DL \ALC, observations can be a mix of several ABox and TBox axioms, and the hypotheses can make arbitrary use of DL constructs, which in particular means that the result can be an unbounded sequence of hypotheses. Connection minimal explanations support \EL ontologies, entailments consisting of a single GCI, and hypotheses are always without role restrictions.
To use \capi, the FOL theorem prover \spass %
needs to be installed separately.
We require an adapted version of \spass, which can be installed following the instructions on the web page of \capi.%
\footnote{
When using the \capi abduction \plugin for the first time, it will ask for the directory that \spass was installed to.
This directory can later be changed in the \Protege preferences, see Section~\ref{Section:Abduction:CapiAbductionSolver}.}

After computing an explanation with an abduction solver, one or more hypotheses will be displayed in a list, as shown in Figures \ref{Figure:OverviewPic} and~\ref{Figure:LetheResultPicture}.
Depending on the input and the algorithm, the number of results may differ and may even be infinite.
Therefore, the user can specify the number of new results that are added to the list whenever the \emph{Generate Explanation} button is clicked again.
Using additional buttons shown at each hypothesis, the user can then easily add the hypothesis to the ontology (to repair the \nonent) and get an explanation why the hypothesis entails the \nonent, using the proof functionality provided by the \texttt{proof-explanation} and \Evee~0.1 \plugins~\cite{DBLP:conf/dlog/KazakovKS17,DBLP:conf/dlog/AlrabbaaBFK0P22}.
The third button can be used to revert the changes to the ontology.
Each service resets the displayed results if any changes are made to the active ontology, unless these changes are made via these \emph{Add} and \emph{Delete} buttons.

\subsection{Complete Signature-Based Abduction}\label{Section:Abduction:LetheAbudctionSolver}
Signature based hypotheses are computed by the abduction extension of the external library \lethe~\cite{DBLP:conf/kr/KoopmannDTS20,DBLP:journals/ki/Koopmann20}.
We extended the original method by an additional, equivalence-preserving simplification step to make the hypotheses more user-friendly.
\LetheResultPicture
The method computes so-called \emph{complete signature-based hypotheses}, which are hypotheses that are fully in the signature, and which generalize any other possible such hypothesis. This is only possible by using
disjunctions and least fixpoint operators, which is why the output of this method is
a disjunction of the form
\(
\bigvee\limits_{i=1}^{n}\big(\bigwedge\limits_{j=1}^{m} \alpha_{i,j}\big)
\),
with each $\alpha_{i,j}$ an $\ALCOImu$ axiom. Intuitively, each disjunct is an alternative hypothesis, but their axioms may include
\emph{least fixpoint concepts} of the form $\mu X.C[X]$~\cite{DBLP:conf/ijcai/CalvaneseGL99}.
To obtain from this disjunction a sequence of hypotheses that can be displayed in \Protege, the fixpoint concepts
need to be \emph{unraveled}.
This is done in order of increasing role depth, \ie the shallowest hypotheses are shown first.
For example, in Figure~\ref{Figure:LetheResultPicture},
hypotheses 4 and 5 are obtained by unravelling of the following assertion, followed by some syntactic reformulations:
\begin{align*}
                     p_2: \mu X.\exists\cn{infected}^-.\big(\exists\cn{contactWith}.\cn{EbolaBat}
                     \sqcup \{p_1\}\sqcup X\big)
\end{align*}

\subsection{Capi Abduction solver}\label{Section:Abduction:CapiAbductionSolver}
The \capi\ abduction solver internally relies on the FOL theorem prover \spass to compute the solutions to an abduction problem.
In particular, based on a translation into first-order logic clauses, \spass computes a set of prime implicates, which
are then used by the Java component of the tool to construct the different
hypotheses (see~\cite{Haifani2022CAPI} for details).
 The \Protege plugin takes some additional input parameters that can be configured by the user.
By default, \Spass stops generating prime implicates after a time limit of
10 seconds is reached. This is usually sufficient to obtain a large set of hypotheses, but if the results are unsatisfactory, the time limit
can be changed under \emph{Preferences $\rightarrow$ Explanations $\rightarrow$ \MisEnt $\rightarrow$ Connection-Minimal Abduction (CAPI)}.
Further options concern the post-processing of solutions generated by \spass, which were not included in the original
implementation presented in~\cite{Haifani2022CAPI}, but later added for convenience: 1) explanations can be simplified by removing redundant axioms, 2) axioms can be simplified by
removing redundant conjuncts or disjuncts, and 3) hypotheses can be ordered by specificity, which means: if one hypothesis implies another one, the implied hypothesis is shown later.
Without these post-processing steps, hypotheses may be long and generally contain long lists of conjunctions, which is why the optimizations are turned on by default.
On the other hand, by deactivating all post-processing steps, we obtain
hypotheses that are faithful to the method described in~\cite{Haifani2022CAPI}.

\section{Adding New Non-Entailment Explanation Services}\label{Section:DeveloperPerspective}
For developers who want to add their own \nonent explanation services, the module \emph{evee-protege-core} %
provides two new extension points for \Protege \plugins:
\highlighttt{de.tu\_dresden.inf.lat.evee.nonEntailment\_explanation\_service}
for explanation services %
and
\highlighttt{de.tu\_dresden.inf.lat.evee.nonEntailment\_preferences}
for managing \plugin-specific preference settings.
The preferences extension point is simple:
One implements the interface \texttt{PreferencesPanel} provided by \Protege, and the resulting panel will be displayed in a tabbed pane accessible via \emph{Preferences $\rightarrow$ Explanations $\rightarrow$ \MisEnt}.
Using the explanation service extension point requires a few more steps.
Essentially, an explanation service needs to provide the \nonent explanations as a Java Stream and visualize the elements of this stream in \Protege.
To facilitate this for abduction and counterexamples, we provide two abstract base classes for abduction and counterexample services.
The main interface for explanation services is %
\texttt{\explainerInterface},
whose most important methods are \texttt{supportsExplanation()} and \texttt{generateExplanations()}.
The first method
determines %
whether the \emph{Generate Explanation} button should be enabled or disabled.
Since this method is called whenever the input changes, its implementation should not be computationally expensive.
The second method returns the %
explanation in the form of a \texttt{Stream<Set<OWLAxiom>{}>},
where each set represents a single explanation for the \misent.\footnote{
For abduction, these sets are the hypotheses, and for counterexamples they are sets of assertions that describe models.}
We use streams to accommodate a potentially infinite number of explanations, as in the case of signature-based abduction.

On top of these generic methods, \texttt{\explanationServiceInterface} provides functionality to connect to the user interface.
The method \texttt{computeExplanation()} is called when the \emph{Generate Explanation} button is clicked, and \texttt{cancel()} is called when
the user wants to cancel.
The explanation service can also use an \texttt{IProgressTracker} to send information to the loading window and an \texttt{\explanationGenerationListener} to send events to the main tab.
To the loading window, one can send the current progress as well as a String describing the current computation status.
The events for the listener can have an \texttt{ExplanationEventType} of \texttt{COMPUTATION\_COMPLETE}, \texttt{RESULT\_RESET}, \texttt{WARNING}, or \texttt{ERROR}.
This allows the explanation service to display new results, clear the shown result, or display warnings or errors, respectively.
The main tab ultimately requires the result in the form of a \texttt{java.awt.Component}, which is retrieved via the method \texttt{getResult} right after an event of type \texttt{COMPUTATION\_COMPLETE} is received.
This way, each service enjoys a great degree of freedom in displaying its explanation. We already provide pre-built functionality for abduction and counterexample services, as described in the following sections.

\subsection{Abstract Counterexample Generation Service}\label{Section:Counterexample:AbstractCounterexampleGenerationService}

The class \texttt{AbstractCounterexampleGenerationService} contains all functionality related to the visualization of counterexamples and implements all methods of the \texttt{INon\-Entailment\-Explanation\-Service} interface.
Classes extending \texttt{Abstract\-Counter\-example\-Gene\-ra\-tion\-Service} differ primarily in the  used \ce generator, which must be specified in the constructor using the method \texttt{setCounterexample\-Generator()}.

Each \ce generator implements the \texttt{IOWLCounterexampleGenerator} interface.
The interface extends \texttt{IOWLNon\-EntailmentExplainer} by \texttt{generateModel()} and \texttt{getMarkedIndividuals()}.
The model returned by \texttt{generateModel()} is represented using a set of \texttt{OWLIndividualAxiom}s. Each of those should be an instance of either \texttt{OWL\-Class\-Asser\-tion\-Axiom} or \texttt{OWLObjectPro\-pertyAssertionAxiom},
which specify the content of the classes and properties in the interpretation.
These axioms are also returned by the method \texttt{generateExplanations()} of the interface \texttt{IOWLCounterexampleGenerator}.
Finally, using the method \texttt{getMarkedIndividuals()}, the service can specify individual names that will be highlighted in the visualization of the model.
As an example of how the abstract counterexample generator operates, consider again the algorithm described in Section~\ref{Section:Counterexamples:smallModel}.
This algorithm is implemented in a separate counterexample generator and executed when \texttt{generateModel()} is called.
Afterwards, the abstract counterexample generator sends an \texttt{ExplanationEvent} of type \texttt{COMPUTATION\_COMPLETE} to the main tab.
The resulting counter example is then provided to the main tab via the method \texttt{generateExplanations()}.

\subsection{Abstract Abduction Solver}\label{Section:Abduction:AbstractAbductionSolver}
The class \texttt{AbstractAbductionSolver} %
is used by both of the \plugins presented in Section~\ref{Section:Abduction}.
The main responsibilities of this class are caching the results computed for a specific input, creating and maintaining the actual result component that is displayed to the user, and handling user input when any of the \emph{Add}- or \emph{Delete}-buttons of a hypothesis are clicked (see Figure~\ref{Figure:LetheResultPicture}).
The class is generic in order to facilitate the caching of different kinds of results for each implementing solver via its generic type parameter.
Caching is not done automatically by the abstract solver class.
Instead, the implementing solver can use the methods \texttt{checkResultInCache}, \texttt{saveResultToCache} and \texttt{loadResultFromCache}. %
In contrast to these user-experience-related functionalities, the actual computation of the \misent explanation is left to the individual implementations of the abstract class.
As explained above, implementing the interface \texttt{\explainerInterface} requires providing a stream of explanations via the method \texttt{generateExplanations()}, \ie a stream of sets of OWLAxioms, where each set represents a single hypothesis.
This method will ultimately be called by the \texttt{AbstractAbductionSolver} when creating the list of \nonent explanations that is shown to the user.

\section{Conclusion}

We believe that our \plugins are an important step towards making reasoning
more understandable to ontology users. The implementation is still relatively new
and there are little performance issues that need to be solved. We hope that
our framework will encourage other developers to implement their own
explanation services in \Evee. In addition to further improving \Evee, we
would like to evaluate our \plugins in a user study. It would also be
interesting to investigate whether \Evee can be used to improve
university-level teaching on ontologies and description logics.

 \paragraph{Acknowledgments}
This work was supported by the DFG grant 389792660 as part of TRR~248
(\url{https://perspicuous-computing.science}).

\pagebreak

\pagebreak
\appendix
\section{Proofs for Section~\ref{Section:Counterexamples:smallModel}}

\begin{figure}[t]

\noindent\fbox{%
    \parbox{\textwidth}{%
$\Delta ^{\mathcal{I}} = \{a\mid C(a) \in \mathcal{A} '\}$

$a^{\mathcal{I}} = a$ for each individual name $a$ occurring in $\mathcal{A'}$

$A^{\mathcal{I}} =\{a\mid A \in a:A\in\Amc'\}$ for each concept name occurring in $\mathcal{A'}$

$r^{\mathcal{I}}  = \left\{( a,b) \mid ( a,b) :r\in A^{\mathcal{I}}\right\}$ for each role name $r$ occurring in $\mathcal{A'}$
    }%
}
\caption{The model $\mathcal{I}$ induced from a complete ABox $\mathcal{A'}$.}
\label{fig:model}

\end{figure}
Figure \ref{fig:model} defines the model of $\mathcal{O'}=\mathcal{A'}\cup\mathcal{T}$ returned by the Algorithm \textsf{Generate-model}($\mathcal{T}$) based on the complete ABox $\mathcal{A'}$.

\begin{lemma}
 \label{lem:pres}
For each consistent $\mathcal{EL}_{\bot}$ ontology $\mathcal{O}=\mathcal{A}\cup\mathcal{T}$ with its TBox $\mathcal{T}$ being normalized and for each expansion rule, the ontology $\mathcal{O'}=\mathcal{A'}\cup\mathcal{T}$ obtained after the rule application is consistent.

\end{lemma}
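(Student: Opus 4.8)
The plan is a case analysis on which of the four expansion rules is applied. Fix a model $\mathcal{I}$ of the consistent ontology $\mathcal{O} = \mathcal{A}\cup\mathcal{T}$. Every rule leaves $\mathcal{T}$ unchanged and only adds assertions to the ABox, so for each rule it suffices to produce a model of $\mathcal{O}' = \mathcal{A}'\cup\mathcal{T}$; in three of the four cases $\mathcal{I}$ itself will serve, and for the remaining case a minimal modification of $\mathcal{I}$ will do.

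For the $\sqcap$-rule, the $\exists_1$-rule, and each of the three variants of the $\sqsubseteq$-rule, I would observe that the freshly added assertion is already a logical consequence of $\mathcal{O}$. Indeed, if $D(a)\in\mathcal{A}$ and $C\in\mathrm{At}(D)$, then $D\sqsubseteq C$ is valid, so $\mathcal{O}\models C(a)$; if $r(a,b),A(b)\in\mathcal{A}$, then every model of $\mathcal{O}$ places $a^{\mathcal{I}}$ in $(\exists r.A)^{\mathcal{I}}$, so $\mathcal{O}\models\exists r.A(a)$; and in each case of the $\sqsubseteq$-rule the premises ($A(a)\in\mathcal{A}$ and $A\sqsubseteq B\in\mathcal{T}$, or the $A_1\sqcap A_2$ variant, or the $\exists r.A$ variant) immediately yield $\mathcal{O}\models B(a)$. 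Since $\mathcal{O}\models\psi$ implies that $\mathcal{O}$ and $\mathcal{O}\cup\{\psi\}$ have exactly the same models, $\mathcal{I}\models\mathcal{O}'$, and hence $\mathcal{O}'$ is consistent.

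The $\exists_2$-rule needs slightly more care and splits along its two branches. In the reuse branch, $\mathcal{A}' = \mathcal{A}\cup\{r(a,c),E(c)\}$, so $\mathcal{O}' = \mathcal{O}\cup\{r(a,c),E(c)\}$, and the side condition of the rule explicitly requires exactly this ontology to be consistent; there is nothing to prove. In the fresh-individual branch, $\mathcal{A}' = \mathcal{A}\cup\{r(a,d),E(d),\top(d)\}$ with $d$ a name not occurring in $\mathcal{O}$; here I would extend $\mathcal{I}$ to $\mathcal{I}'$ by interpreting $d$ as some $e\in\Delta^{\mathcal{I}}$ with $(a^{\mathcal{I}},e)\in r^{\mathcal{I}}$ and $e\in E^{\mathcal{I}}$, which exists because $\exists r.E(a)\in\mathcal{A}$ forces $a^{\mathcal{I}}\in(\exists r.E)^{\mathcal{I}}$. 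As $d$ does not appear in $\mathcal{O}$, the interpretation $\mathcal{I}'$ still satisfies $\mathcal{O}$, and by construction it satisfies $r(a,d)$, $E(d)$, and $\top(d)$, so $\mathcal{I}'\models\mathcal{O}'$. This last branch is the only genuinely non-routine point, since it is the one place where the given model must be modified rather than reused; the modification is harmless precisely because the new name is fresh. (The normalization hypothesis on $\mathcal{T}$ is not actually needed for this lemma — it matters only for the later parts of the correctness argument.)
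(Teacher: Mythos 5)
Your proposal is correct and follows essentially the same route as the paper's proof: for the $\sqcap$-, $\exists_1$-, and $\sqsubseteq$-rules the given model $\mathcal{I}$ already satisfies the added assertion; in the reuse branch of the $\exists_2$-rule the side condition guarantees consistency; and in the fresh-individual branch one reinterprets the new name as a witness of $a^{\mathcal{I}}\in(\exists r.E)^{\mathcal{I}}$. If anything, you are slightly more careful than the paper in distinguishing the modified interpretation $\mathcal{I}'$ from $\mathcal{I}$ and in noting that normalization of $\mathcal{T}$ is not actually used here.
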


\begin{proof}
Let $\mathcal{I}$ be a model of $\mathcal{O}$ before the rule application,
for each expansion rule we show that $\mathcal{I}$ is a model of $\mathcal{O'}=\mathcal{A'}\cup\mathcal{T}$ obtained after the rule application.

\begin{description}

\item [\textbf{The $\sqcap$-rule}.]
If $A_1(a)\sqcap A_{2}\in \mathcal{A}$, then $a^{\mathcal{I}}\in (A_{1} \sqcap A_{2})^{\mathcal{I}}$, then $a^{\mathcal{I}}\in A_{1}^{\mathcal{I}} \cap A_{2}^{\mathcal{I}}$, and then $a^{\mathcal{I}}\in A_{1}^{\mathcal{I}}$ and  $a^{\mathcal{I}}\in A_{2}^{\mathcal{I}}$. So $\mathcal{I}$ is a model of $\mathcal{A}\cup\{A_1(a),A_2(a)\}$.
\item [\textbf{The $\exists_{1}$-rule}.]
If $r(a,b)$ and $B(b)$ in $\mathcal{A}$, then $(a^{\mathcal{I}},b^{\mathcal{I}})\in r^{\mathcal{I}}$ and $b^{\mathcal{I}}\in B^{\mathcal{I}}$. By induction, $a^{\mathcal{I}}\in  (\exists r.B)^{\mathcal{I}} $.
The $\exists_{1}$-rule adds  $\exists r.B(a)$ to $\mathcal{A}$. $\mathcal{I}$ is still a model of $\mathcal{A} \cup \{\exists r.B(a)\}$.
\item [\textbf{The $\sqsubseteq$-rule}.]
If $a:A\in \mathcal{A}$ and $A\sqsubseteq B \in \mathcal{T}$, $a^{\mathcal{I}}\in A^{\mathcal{I}}$ and $a^{\mathcal{I}}\in B^{\mathcal{I}}$. So $\mathcal{I}$ is a model of $\mathcal{A}\cup\{a:B\}$.

\item [\textbf{The $\exists_{2}$-rule}.]
Let $C$ be an arbitrary $\mathcal{EL}$ concept.
If $\exists r.C(a) \in \mathcal{A}$, then $a^{\mathcal{I}}\in  (\exists r.C)^{\mathcal{I}} $.
Thus, there is $x$, s.t. $(a^{\mathcal{I}},x)\in r^{\mathcal{I}}$ and $x\in C^{\mathcal{I}}$.
There are two possible cases of an application of the $\exists_{2}$-rule:
\begin{enumerate}
    \item There is some $b$ s.t.\ $\mathcal{O} \cup \{r(a,b),C(b)\}$ is not inconsistent.
    So, the $\exists_{2}$-rule adds $r(a,b)$.
    By the definition of the rule, the ontology is still consistent.
    \item There is no such $b$ s.t.\ $\mathcal{O} \cup \{r(a,b),C(b)\}$ is consistent.
    Then $c$ is created and $\{r(a,c)$, $C(c)$, $c:\top\}$ is added to $\mathcal{A}$.
    If we say that $c^{\mathcal{I}} = x$, then $\mathcal{I}$ is a model of $\mathcal{A} \cup \{r(a,c)$, $C(c)$, $ c:\top\}$.

\end{enumerate}
\end{description}

For each expansion rule, the interpretation  $\mathcal{I}$ is still a model of $\mathcal{O'}=\mathcal{A'}\cup\mathcal{T}$ obtained after the rule application.
Therefore, the ontology $\mathcal{O'}$ obtained after the rule application is consistent.
\end{proof}

\begin{lemma}[Termination]
\label{lem:term} For each consistent normalized $\mathcal{EL}_{\bot}$ TBox $\mathcal{T}$, the algorithm \textsf{Generate-model}($\mathcal{T}$) terminates.
\end{lemma}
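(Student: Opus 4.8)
The goal is to show that \textsf{Generate-model}($\mathcal{T}$) terminates on every consistent normalized $\mathcal{EL}_\bot$ TBox~$\mathcal{T}$, starting from the initial ABox $\mathcal{A}=\{C(a^*)\}$. The standard way to prove termination of such tableau procedures is to exhibit a well-founded measure that strictly decreases (or at least does not increase, together with a secondary argument) with every rule application, and crucially to bound the number of individuals that can ever be created. First I would observe that all assertions produced by the rules use only \emph{subconcepts of concepts occurring in the normalized TBox} (plus the fresh $B^*$), of which there are finitely many --- call this finite set $\mathsf{sub}(\mathcal{T})$; similarly only finitely many role names occur. Hence for a fixed finite set of individuals, the number of possible concept assertions $E(a)$ with $E\in\mathsf{sub}(\mathcal{T})$ and role assertions $r(a,b)$ is finite, and since no rule ever \emph{removes} an assertion, only finitely many rule applications can happen before a new individual is introduced. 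So the whole argument reduces to bounding the number of fresh individuals created by the $\exists_2$-rule.

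For the individual bound I would use the usual ``tree of bounded depth and branching'' argument adapted to the reuse strategy. A fresh individual $d$ is created by the $\exists_2$-rule only in the \emph{else} branch, \ie when no existing individual $c$ can serve as an $r$-successor for $E$ without causing inconsistency or making $a^*$ an instance of $B^*$. Each such $d$ is created to witness a single assertion $\exists r.E(a)$ with $E\in\mathsf{sub}(\mathcal{T})$, and it is introduced with only $\top(d)$; the concept assertions that later accumulate on $d$ are again all from $\mathsf{sub}(\mathcal{T})$. The key point is that in a normalized $\mathcal{EL}_\bot$ TBox the right-hand sides of GCIs that can trigger creation of successors are of the shape $\exists r.A$ with $A$ a concept \emph{name} (or $\top$); so the ``demand'' $\exists r.E(a)$ that the $\exists_2$-rule acts on always has $E$ a concept name or $\top$, of which there are finitely many, say $N$. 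Thus each individual can spawn at most $N$ children (one per distinct pair $(r,A)$), giving a finitely branching tree rooted at $a^*$. For the depth bound, I would argue that the set of concept names asserted at a child is determined by the triggering pair $(r,A)$ together with the TBox closure, so that after finitely many levels the ``types'' of individuals must repeat along a branch; but since a repeated type would mean that the reuse condition of the $\exists_2$-rule applies (the earlier individual of the same type is a consistent candidate successor), no new individual of a repeated type is ever created. Hence every branch has length bounded by the number of distinct realizable subsets of concept names, which is finite, and König's lemma gives a finite tree.

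Putting these together: finitely many individuals overall, hence finitely many possible assertions overall, hence --- since rules only add assertions and never retract them --- the process must halt. I would phrase the decreasing measure as the pair $(\text{number of individuals still allowed}, \text{number of absent admissible assertions})$ ordered lexicographically, or simply argue directly that the monotone growing finite set of assertions stabilizes.

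\textbf{Main obstacle.} The delicate step is the \emph{depth bound} on the individual tree: I need to argue rigorously that the reuse check in the $\exists_2$-rule actually fires whenever a ``type'' would repeat, so that branches cannot grow unboundedly. This requires care because reuse is blocked in two situations --- when adding $r(a,c),E(c)$ would cause an inconsistency, or when it would entail $B^*(a^*)$. I must show that for an individual $c$ whose already-derived concept-name set equals the set that a hypothetical fresh child would eventually acquire, neither obstruction can prevent reuse: inconsistency is ruled out because the ontology stays consistent (Lemma~\ref{lem:pres}) and $E$ being a concept name already true of $c$ adds nothing new, and the $B^*(a^*)$ condition is a global monotone constraint that, I will argue, cannot distinguish $c$ from the fresh copy. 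If a fully clean argument for the $B^*$-caveat proves awkward, the fallback is the coarser bound: observe that $\mathcal{O}$ stays consistent, fix \emph{one} model $\mathcal{I}_0$ of the final saturated ontology (which exists by Lemma~\ref{lem:pres}), and bound the number of created individuals by the number of distinct realized concept-name types in a suitably chosen canonical/least model of $\mathcal{T}$ --- exploiting that $\mathcal{EL}_\bot$ has a finite canonical model up to the relevant types --- which sidesteps reasoning about the reuse heuristic entirely.
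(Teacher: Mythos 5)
Your proposal follows essentially the same route as the paper's proof: rules only add assertions drawn from the finite set $\textsf{sub}(\mathcal{O})$, so the work per individual is bounded; the branching of the $\exists_2$-rule is bounded by the number of existential demands per individual; and the depth of successor chains is bounded because two individuals on a chain with the same derived concept set would force the reuse branch of the $\exists_2$-rule to fire instead of creating a fresh element. The one obstacle you flag is real, but it is precisely the step the paper's own proof also elides: the paper argues only that a failure to reuse $b$ would make the ontology with $b$ as successor inconsistent, and never addresses the second blocking condition, namely that $\mathcal{T}\cup\mathcal{A}\cup\{r(a,b),E(b)\}$ might entail $B^*(a^*)$ while a fresh successor does not. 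So your explicit caveat (and the canonical-model fallback you sketch) identifies a genuine gap shared with the published argument rather than a weakness specific to your approach.
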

\begin{proof}Let $\Omc$ be an ontology containing the TBox $\Tmc$ and an ABox $\Amc$ initialized as described in Section \ref{Section:Counterexamples:smallModel}.
Also, let $m$ be $|\textsf{sub}(\mathcal{O})|$ and $n$  be the number of concepts in $\mathcal{O}$.
Termination follows from the following properties:

\begin{enumerate}
\item For a given individual $a$, we can have only a finite number of rule applications. The reasons for that are:

\begin{enumerate}
    \item The expansion rules never delete an assertion.
    \item The $\sqcap$-rule, the $\exists_{2}$-rule,the $\sqsubseteq$-rule can only add a new assertion of the form $C(a)$ for $C \in$ sub($\mathcal{O}$).
    \item The $\exists_{1}$-rule can only add a new assertion of the form $\exists r.A(a)$ for $A $ being a concept name.
\end{enumerate}
So, for a given individual we can have at most  $m$ + $n$ rule applications that add a concept assertion.
\item The number of individuals in the resulting ABox $\mathcal{A'}$ is finite.
\begin{enumerate}
\item Because the size of $\mathcal{A}$ is finite, it can contain only a finite number of individuals.
\item For a given individual, the number of successors, generated by applications of the $\exists_{2}$-rule is finite.
\begin{claim}
The $\exists_{1}$-rule cannot add an assertion which can trigger the $\exists_{2}$-rule.
\end{claim}
\textit{Proof of the Claim.} the $\exists_{1}$-rule is triggered by $\{r(a,b),A(b)\}\subseteq\mathcal{A}$, s.t. $A$ is a concept name or $\top$, and adds $\exists r.A(a)$. the $\exists_{2}$-rule is triggered only if $\exists r.A(a)\in\mathcal{A}$ but $\{r(a,b),A(b)\}$ not in $\mathcal{A}$.
Therefore, it can never be applicable because the action of the $\exists_{2}$-rule in this case is equal to the condition of the $\exists_{1}$-rule.\hfill$\blacksquare$

Therefore, for a given individual, the number of successors generated by applications of the $\exists_{2}$-rule is bounded by $m$ because each individual can belong to only $m$ concepts, that can trigger the $\exists_{2}$-rule.
\item For a given individual, the depth of the chain of successors generated by applications of the $\exists_{2}$-rule is bounded.
For any individual $a$, any path along its successors can contain at most $2^{m}$ individuals before it contains individual names $b$ and $ c$ such that \textsf{con}$\index{\mathcal{A}}$($b$) = \textsf{con}$\index{\mathcal{A}}$($c$).
If $c$ was created but $b$ was not reused,
the ontology $\mathcal{O'}=\mathcal{A'}\cup\mathcal{T} $, where $\mathcal{A'}$ is the ABox where $b$ is used as the successor, is inconsistent.
But because \textsf{con}$_{\mathcal{A}}$($b$) = \textsf{con}$_{\mathcal{A}}$($c$), for the ABox $\mathcal{A}$ where $c$ was created,  $\mathcal{O}=\mathcal{A}\cup\mathcal{T} $ is also inconsistent.
No expansion rule can bring inconsistency, as shown in Lemma \ref{lem:pres}.
Therefore, the input ontology should be inconsistent, which contradicts the initial assumption.

We obtain that for a giving individual, the depth of the chain of successors generated by applications of the $\exists_{2}$-rule is bounded by $2^{m}$.

\end{enumerate}
\end{enumerate}
The algorithm can generate only a finite number of individuals and for each of them the number of rule applications is bounded.
Therefore, the algorithm terminates in a finite number of rule applications.
\end{proof}

\begin{lemma}
\label{lem:sound}
Let $\Omc$ be an ontology containing the TBox $\Tmc$ and an ABox $\Amc$ initialized as described in Section \ref{Section:Counterexamples:smallModel}.
Assume, $\mathcal{T}$ is normalized, then the interpretation $\mathcal{I}$ returned by \textsf{Generate-model}($\mathcal{T}$) is a model of $\mathcal{O}$.
\end{lemma}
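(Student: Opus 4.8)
The plan is to show that the completed ABox $\mathcal{A}'$ produced by the algorithm is saturated with respect to the rules of Table~\ref{t:rules}, and that the interpretation $\mathcal{I}$ read off from $\mathcal{A}'$ as in Figure~\ref{fig:model} satisfies every assertion of $\mathcal{A}'$ and every GCI of the normalized TBox $\mathcal{T}$. Since no expansion rule ever deletes an assertion, $\mathcal{A}\subseteq\mathcal{A}'$, so $\mathcal{I}\models\mathcal{A}'$ immediately gives $\mathcal{I}\models\mathcal{A}$, and together with $\mathcal{I}\models\mathcal{T}$ this yields $\mathcal{I}\models\mathcal{O}$. First I would record two standing facts: by Lemma~\ref{lem:term} the algorithm terminates, so $\mathcal{A}'$ is a finite ABox to which no expansion rule applies; and by Lemma~\ref{lem:pres} the ontology $\mathcal{A}'\cup\mathcal{T}$ is consistent, using that the input ontology is consistent (which is anyway necessary for a counterexample to exist). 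In particular no assertion $\bot(a)$ occurs in $\mathcal{A}'$, and no GCI of $\mathcal{T}$ has $\bot$ on its right-hand side while its left-hand side is populated in $\mathcal{A}'$.

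The core step would be to prove, by structural induction on the $\mathcal{EL}_{\bot}$ concept $E$, that for every concept assertion $E(a)\in\mathcal{A}'$ we have $a\in E^{\mathcal{I}}$. For $E=\top$ and for $E$ a concept name this is immediate from the definition of $\mathcal{I}$, noting that $a\in\Delta^{\mathcal{I}}$ because $a$ occurs in a concept assertion; the case $E=\bot$ cannot arise by consistency. If $E$ is a conjunction, then since the $\sqcap$-rule is not applicable every conjunct $F\in\mathrm{At}(E)$ has $F(a)\in\mathcal{A}'$, so the induction hypothesis gives $a\in F^{\mathcal{I}}$ for each such $F$, whence $a\in E^{\mathcal{I}}$. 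If $E=\exists r.F$, then since the $\exists_2$-rule is not applicable there is some $b$ with $r(a,b)\in\mathcal{A}'$ and $F(b)\in\mathcal{A}'$; by the induction hypothesis $b\in F^{\mathcal{I}}$, and $(a,b)\in r^{\mathcal{I}}$ by definition, so $a\in(\exists r.F)^{\mathcal{I}}$. As role assertions hold in $\mathcal{I}$ by construction, this establishes $\mathcal{I}\models\mathcal{A}'$ and hence $\mathcal{I}\models\mathcal{A}$.

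It then remains to check $\mathcal{I}\models\mathcal{T}$ by a case analysis over the normalized GCI shapes, each time using that $\mathcal{A}'$ is rule-complete. For $A\sqsubseteq B$: if $a\in A^{\mathcal{I}}$ then $A(a)\in\mathcal{A}'$, and non-applicability of the $\sqsubseteq$-rule forces $B(a)\in\mathcal{A}'$ (the subcase $B=\bot$ being vacuous by consistency). For $A_1\sqcap A_2\sqsubseteq B$: if $a\in A_1^{\mathcal{I}}\cap A_2^{\mathcal{I}}$ then $A_1(a),A_2(a)\in\mathcal{A}'$ and the $\sqsubseteq$-rule yields $B(a)\in\mathcal{A}'$. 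For $A\sqsubseteq\exists r.B$: if $a\in A^{\mathcal{I}}$ then $A(a)\in\mathcal{A}'$, the $\sqsubseteq$-rule adds $\exists r.B(a)$, and the $\exists_2$-rule then produces a witness $b$ with $r(a,b),B(b)\in\mathcal{A}'$, so $a\in(\exists r.B)^{\mathcal{I}}$. For $\exists r.A\sqsubseteq B$: if $a\in(\exists r.A)^{\mathcal{I}}$ there is $b$ with $r(a,b)\in\mathcal{A}'$ and $A(b)\in\mathcal{A}'$; since $A$ is a concept name (or $\top$), the $\exists_1$-rule gives $\exists r.A(a)\in\mathcal{A}'$, whence the $\sqsubseteq$-rule gives $B(a)\in\mathcal{A}'$ and $a\in B^{\mathcal{I}}$. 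Combining both parts, $\mathcal{I}$ satisfies every axiom of $\mathcal{O}$.

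I expect the main obstacle to be making the structural induction and the normal-form case analysis mesh exactly: one must verify that the three branches of the $\sqsubseteq$-rule, together with the $\exists_1$- and $\exists_2$-rules, jointly cover all normalized GCI shapes, and check that the rule side-conditions (which mention $\mathrm{At}(\cdot)$ and restrict to concept names or $\top$) are precisely what is needed for "no rule applicable" to yield the required assertions in each case. A secondary point to pin down is the treatment of $\top$ inside existentials (e.g.\ a normalized axiom $\exists r.\top\sqsubseteq B$), which hinges on the fact that individuals generated by the rules carry a $\top$-assertion so that the $\exists_1$-rule indeed fires on every $r$-edge.
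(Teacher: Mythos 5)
Your proof is correct and follows essentially the same route as the paper's: a structural induction showing $\mathcal{I}$ satisfies every assertion of the saturated ABox $\mathcal{A}'$, followed by a case analysis over the normalized GCI shapes using non-applicability of the rules, and the conclusion via $\mathcal{A}\subseteq\mathcal{A}'$. You are in fact somewhat more careful than the paper on the side cases (excluding $\bot$ via consistency, GCIs with $\exists r.B$ on the right-hand side, and the role of $\top$-assertions for the $\exists_1$-rule), all of which are handled correctly.
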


\begin{proof}
To prove Lemma \ref{lem:sound}, we first show that the interpretation $\mathcal{I}$ returned by the algorithm is a model of $\mathcal{O'}=\mathcal{A'}\cup\mathcal{T}$, where $\mathcal{A'}$ is the complete ABox, obtained by the algorithm.
\begin{description}
\item \emph{$\mathcal{I}$ is a model of every assertion in $\mathcal{A'}$}.
\begin{description}
    \item [role assertions:] $r(a,b) \in \mathcal{A'}$.  $(a^{\mathcal{I}},b^{\mathcal{I}})= (a,b) \in r^{\mathcal{I}}$ by the definition of $\mathcal{I}$.
    \item [concept assertions:]$C(a)\in \mathcal{A'}$, where $C$ is an arbitrary concept.
    We show that $a^{\mathcal{I}}\in C^{\mathcal{I}}$ by induction on the structure of $C$:
    \begin{description}
    \item [$C = A$.]
    If $a:A\in \mathcal{A'}$,  $a^{\mathcal{I}}\in A^{\mathcal{I}}$ by the definition of $\mathcal{I}$.
    \item [$C = A \sqcap B$.]
    Completeness of $\mathcal{A'}$ yields that  $\{a:A,B(b)\} \subseteq\mathcal{A} $, otherwise the $\sqcap$-rule would be applicable.
    By the definition of $\mathcal{I}$, $a^{\mathcal{I}}\in A^{\mathcal{I}}$ and $a^{\mathcal{I}}\in B^{\mathcal{I}}$, induction yields that $a^{\mathcal{I}} \in (A \cap B)^{\mathcal{I}} $.
    \item [$C = \exists r. D$.]
    Completeness of $\mathcal{A'}$ yields that there is some $b$ s.t. $\{b:D,r(a,b)\} \subseteq\mathcal{A'} $.
    By the definition of $\mathcal{I}$, $b\in D^{\mathcal{I}}$ and $ (a^{\mathcal{I}},b^{\mathcal{I}}) \in r^{\mathcal{I}}$, induction yields $a^{\mathcal{I}} \in \exists r. D^{\mathcal{I}} $.
    \end{description}

\end{description}

\item \emph{$\mathcal{I}$ is a model of every GCI in $\mathcal{T}$}.
If $\mathcal{I}$ also is a model of $\mathcal{T}$ , GCI's in $\mathcal{T}$ should be satisfied.
Let $C$ and $D$ be arbitrary $\mathcal{EL}$ concepts that can appear in a normalized TBox.
We show soundness by showing that whenever a domain element $a^{\mathcal{I}}$ belongs to $C^{\mathcal{I}}$ and $C\sqsubseteq D \in \mathcal{T}, a^{\mathcal{I}}$ also belongs to $ D^{\mathcal{I}} $.
    \begin{description}
    \item [$C = A$.]
    If $a^{\mathcal{I}}\in A^{\mathcal{I}}$, then $a:A\in \mathcal{A'}$. By completeness of $\mathcal{A'}$, $D(a)$ is also in $\mathcal{A'}$, otherwise the $\sqsubseteq$-rule would be applicable.
    Thus, by the definition of the model $\mathcal{I}$, $a^{\mathcal{I}}\in D^{\mathcal{I}}$.
    \item [$C = A_{1} \sqcap A_{2}$.]
    If $a^{\mathcal{I}}\in (A_1 \cap A_2)^{\mathcal{I}}$, then $a^{\mathcal{I}}\in (A_1 )^{\mathcal{I}}$ and $a^{\mathcal{I}}\in (A_2 )^{\mathcal{I}}$, which  yields that  $\{A_1(a),A_2(a)\} \subseteq\mathcal{A'} $.
     By completeness of $\mathcal{A'}$, $D(a)$ is also in $\mathcal{A'}$ otherwise the $\sqsubseteq$-rule would be applicable.
    Then, by the definition of the model $\mathcal{I}$, $a^{\mathcal{I}}\in D^{\mathcal{I}}$

    \item [$C = \exists r. B$.]
    If $a^{\mathcal{I}}\in (\exists r. B )^{\mathcal{I}}$, then there is some $b$, s.t. $(a^{\mathcal{I}},b^{\mathcal{I}}) \in r^{\mathcal{I}}$
    and $b^{\mathcal{I}} \in B^{\mathcal{I}}$.
  Therefore, $\{r(a,b),B(b)\} \subseteq\mathcal{A'} $. By completeness of $\mathcal{A'}$, $\exists r.B(a)$ is also in $\mathcal{A'}$ otherwise the $\exists_1$-rule would be applicable and $D(a)$ is also in $\mathcal{A'}$ otherwise the $\sqsubseteq$-rule would be applicable.
    Then, by the definition of the model $\mathcal{I}$, $a^{\mathcal{I}}\in D^{\mathcal{I}}$
    \end{description}

\end{description}

Because the expansion rules do not delete assertions, $\mathcal{A} \subseteq\mathcal{A'} $ and $\mathcal{I}$ is a model of $\mathcal{O} = \mathcal{A}\cup\mathcal{T}$.
\end{proof}

We show soundness of the algorithm by contrapositive:
\begin{lemma}[Soundness]
\label{lem:sound2}

If the input normalized TBox $\mathcal{T}\not\models C\sqsubseteq D$, then in the interpretation $\mathcal{I}$ generated by \textsf{Generate-model}($\mathcal{T}$), $a^{*\mathcal{I}}\not\in B^{*\mathcal{I}}$.
\end{lemma}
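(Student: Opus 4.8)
The plan is to prove the contrapositive, and in fact a stronger invariant: if $\mathcal{T}\not\models C\sqsubseteq D$, then at no point during the run of \textsf{Generate-model} does the current ontology entail $B^*(a^*)$. Write $\mathcal{A}_0=\{C(a^*)\}$ for the initial ABox, let $\widehat{\mathcal{T}}$ be the normalisation of $\mathcal{T}\cup\{D\sqsubseteq B^*\}$ used inside the algorithm, and set $\mathcal{O}_0:=\mathcal{A}_0\cup\widehat{\mathcal{T}}$. Under the hypothesis $\mathcal{T}\not\models C\sqsubseteq D$ the TBox $\mathcal{T}$ is consistent (otherwise it would entail everything), so by Lemma~\ref{lem:term} the algorithm terminates and produces a finite sequence $\mathcal{O}_0,\mathcal{O}_1,\dots,\mathcal{O}'$ with $\mathcal{O}'=\mathcal{A}'\cup\widehat{\mathcal{T}}$, where each $\mathcal{O}_{i+1}$ arises from $\mathcal{O}_i$ by one rule application of Table~\ref{t:rules}. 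I would prove by induction on $i$ that $\mathcal{O}_i\not\models B^*(a^*)$. For $i=0$ this is just the hypothesis restated: since $a^*$ is fresh, $\mathcal{O}_0\models B^*(a^*)$ iff $\widehat{\mathcal{T}}\models C\sqsubseteq B^*$, and since normalisation is a conservative extension this is equivalent to $\mathcal{T}\cup\{D\sqsubseteq B^*\}\models C\sqsubseteq B^*$, which by the observation in Section~\ref{Section:Counterexamples:smallModel} is equivalent to $\mathcal{T}\models C\sqsubseteq D$; hence $\mathcal{O}_0\not\models B^*(a^*)$.

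For the inductive step I would do a case analysis on the rule applied to pass from $\mathcal{O}_i$ to $\mathcal{O}_{i+1}$. The $\sqcap$-, $\exists_1$-, and $\sqsubseteq$-rules each add only an assertion that is already a logical consequence of $\mathcal{O}_i$ --- a conjunct of an asserted conjunction, an existential already witnessed by an explicit role edge together with a concept assertion at its target, and a GCI-consequence of $\widehat{\mathcal{T}}$, respectively --- so $\mathcal{O}_{i+1}$ is logically equivalent to $\mathcal{O}_i$ and the invariant is inherited at once. The reuse branch of the $\exists_2$-rule comes with its own guarantee: it fires only for a candidate successor $c$ for which $\mathcal{O}_i\cup\{r(a,c),E(c)\}$ does not entail $B^*(a^*)$, and that set is precisely $\mathcal{O}_{i+1}$.

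The only genuinely new step is the fresh-successor branch of the $\exists_2$-rule, which adds $\{r(a,d),E(d),\top(d)\}$ for a brand-new individual $d$ when $\exists r.E(a)\in\mathcal{A}_i$; here I argue semantically. By the induction hypothesis $\mathcal{O}_i$ does not entail $B^*(a^*)$, so in particular it is consistent and has a model $\mathcal{J}$ with $a^{*\mathcal{J}}\notin B^{*\mathcal{J}}$. Since $\mathcal{J}\models\exists r.E(a)$, pick $x$ with $(a^{\mathcal{J}},x)\in r^{\mathcal{J}}$ and $x\in E^{\mathcal{J}}$ and extend $\mathcal{J}$ by $d^{\mathcal{J}}:=x$; as $d$ does not occur in $\mathcal{O}_i$, the result still models $\mathcal{O}_i$, additionally satisfies the three new assertions, and still falsifies $B^*(a^*)$, so $\mathcal{O}_{i+1}\not\models B^*(a^*)$. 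Applying the invariant to the terminal ontology gives $\mathcal{O}'=\mathcal{A}'\cup\widehat{\mathcal{T}}\not\models B^*(a^*)$, hence $B^*(a^*)\notin\mathcal{A}'$. By the definition of $\mathcal{I}$ in Figure~\ref{fig:model} we have $a^{*\mathcal{I}}=a^*$ and $a^*\in B^{*\mathcal{I}}$ iff $B^*(a^*)\in\mathcal{A}'$, so $a^{*\mathcal{I}}\notin B^{*\mathcal{I}}$, as claimed.

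I expect the fresh-successor branch of the $\exists_2$-rule to be the point requiring the most care: one must make sure that introducing the new element $d$ cannot retroactively force $a^*$ into $B^*$. The model-extension argument above is exactly what rules this out --- the existential $\exists r.E(a)$ is already entailed, so every model already contains a suitable witness, and because $\mathcal{EL}_\bot$ has no nominals the new element can neither be identified with $a^*$ nor transmit to it any constraint that $\exists r.E(a)$ did not already imply; Lemma~\ref{lem:pres} separately confirms that no rule step introduces an inconsistency. A smaller point worth stating explicitly is that normalisation of the TBox and the introduction of the auxiliary name $B^*$ are conservative extensions, which is what legitimises the base-case equivalence $\mathcal{O}_0\models B^*(a^*)$ iff $\mathcal{T}\models C\sqsubseteq D$.
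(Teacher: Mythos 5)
Your proof is correct and follows essentially the same route as the paper's: a rule-by-rule analysis showing that $B^*(a^*)$ can never be introduced into the ABox when $\mathcal{T}\not\models C\sqsubseteq D$, combined with the observation that $a^{*\mathcal{I}}\in B^{*\mathcal{I}}$ only if $B^*(a^*)\in\mathcal{A}'$. You are in fact somewhat more careful than the paper, which makes the inductive invariant only implicit and dismisses the fresh-successor branch of the $\exists_2$-rule with ``by the definition of the rule,'' whereas your explicit model-extension argument for the fresh individual $d$ is what actually justifies that step.
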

\begin{proof} Lemmas \ref{lem:term}, \ref{lem:sound} show that the algorithm generates an interpretation $\mathcal{I}$ in a finite number of steps, and that this interpretation is indeed a model of $\mathcal{O}= \mathcal{T}\cup \mathcal{A}$, where $\mathcal{A}=\{a^*:C\}$.
We also know that if $\mathcal{A}\cup \mathcal{T}\not\models C\sqsubseteq D$ then $\mathcal{A}\cup \mathcal{T}\not\models B^*(a^*)$. To prove that in the interpretation $\mathcal{I}$, the root individual $a^{*\mathcal{I}}$ does not belong to $B^{*\mathcal{I}}$, we show that no rule can add $B^*(a^*)$ if $\mathcal{A}\cup \mathcal{T}\not\models B^*(a^*)$.

\begin{description}
\item [\textbf{The $\sqcap$-rule}.]
If $A_{1}\sqcap A_{2}(a)\in \mathcal{A}$, the application of the $\sqcap$-rule adds $\{A_1(a),A_2(a)\}$. Assume that $a=a^*$ and $A_{1}=B^*$ then $B^*(a^*)$ is added. But then $\mathcal{A}\cup\mathcal{T}\models B^*(a^*)$ because $B^*\sqcap A2\sqsubseteq_\emptyset B^*$. Therefore, this rule can add $B^*(a^*)$ iff $\mathcal{A}\cup\mathcal{T}\models B^*(a^*)$.
\item [\textbf{The $\exists_{1}$-rule}.]
This rule can not add a concept name.
\item [\textbf{The $\sqsubseteq$-rule.}]
If $a:A\in \mathcal{A}$, $A \sqsubseteq B\in \mathcal{T}$ or $\{A_1(a),A_2(a)\}\subseteq \mathcal{A}$, $A_{1} \sqcap A_{2} \sqsubseteq B\in \mathcal{T}$ or $\exists r.A(a)\in \mathcal{A}$, $\exists r.A \sqsubseteq B\in \mathcal{T}$, the $\sqsubseteq$-rule adds $a:B$. Assume that $a=a^*$ and $B=B^*$, then $B^*(a^*)$ is added. But then, if $a^*:A\in \mathcal{A}$, $A \sqsubseteq B^*\in \mathcal{T}$, then $\mathcal{A}\cup\mathcal{T}\models B^*(a^*)$. The same is true if $\exists r.A(a)\in \mathcal{A}$, $\exists r.A \sqsubseteq B^*\in \mathcal{T}$. If $\{a^*:A_{1},a^*:A_{2}\}\subseteq \mathcal{A}$, $A_{1} \sqcap A_{2} \sqsubseteq B^*\in \mathcal{T}$, then $\mathcal{A}\cup\mathcal{T}\models A_1\sqcap A_2(a^*)$ because in any model $\mathcal{I}$ of $\mathcal{A}$, $a^{*\mathcal{I}} \in A^\mathcal{I}_1 $ and $a^{*\mathcal{I}} \in A^\mathcal{I}_2 $, therefore $a^{*\mathcal{I}} \in (A_1\cap A_2)^\mathcal{I} $,
and finally $\mathcal{A}\cup\mathcal{T}\models B^*(a^*)$ because $A_1 \sqcap A_2 \sqsubseteq_\mathcal{T}B^*$. The $\sqsubseteq$-rule can add $B^*(a^*)$ iff $\mathcal{A}\cup\mathcal{T}\models B^*(a^*)$.

\item [\textbf{The $\exists_{2}$-rule.}]
This rule can not add $B^*(a^*)$ if $\mathcal{A}\cup \mathcal{T}\not\models B^*(a^*)$ by the definition of the $\exists_{2}$-rule.
\end{description}

That shows that no rule application can add $B^*(a^*)$, unless $\mathcal{A}\cup\mathcal{T}\models B^*(a^*)$ and, which is equivalent, $\mathcal{A}\cup\mathcal{T}\models C\sqsubseteq
 D$. According to the definition of the model $\mathcal{I}$, $a^{*\mathcal{I}}\in B^{*\mathcal{I}}$ only if $B^*(a^*)\in \mathcal{A'}$, which is not the case because the input ABox $\mathcal{A}$ did not contain $B^*(a^*)$ according to its definition and no rule could add $B^*(a^*)$ to it. Therefore, $a^{*\mathcal{I}}\not\in B^{*\mathcal{I}}$.
\end{proof}

We show completeness also by contrapositive:
\begin{lemma}[Completeness]
\label{lem:comp2}

If in the generated by \textsf{Generate-model}($\mathcal{T}$) interpretation $\mathcal{I}$, the element $a^{*\mathcal{I}}$ is in $ B^{*\mathcal{I}}$, then the input normalized TBox
$\mathcal{T}\not\models C\sqsubseteq D$.
\end{lemma}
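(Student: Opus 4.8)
The plan is to read the statement as the completeness half of the biconditional underlying the whole construction, and to obtain it by transferring the semantic hypothesis $a^{*\mathcal{I}}\in B^{*\mathcal{I}}$ into a syntactic fact about the completed ABox and then invoking the rule analysis already carried out for Lemma~\ref{lem:sound2}. First I would use the definition of the induced model $\mathcal{I}$ from Figure~\ref{fig:model}: a concept name holds at $a^*$ in $\mathcal{I}$ precisely when the corresponding assertion occurs in the complete ABox $\mathcal{A}'$. Hence the assumption $a^{*\mathcal{I}}\in B^{*\mathcal{I}}$ is equivalent to $B^*(a^*)\in\mathcal{A}'$, and the task reduces to showing that $B^*(a^*)$ can only be present in $\mathcal{A}'$ when it is logically forced.

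Next I would trace the provenance of $B^*(a^*)$. Since $B^*$ is a \emph{fresh} concept name, it does not occur in the initial ABox $\mathcal{A}=\{a^*:C\}$, so $B^*(a^*)$ must have been introduced by some rule application during the run of \textsf{Generate-model}. I would then quote, essentially verbatim, the case distinction from the proof of Lemma~\ref{lem:sound2}: the $\exists_1$-rule never adds a bare concept name; the $\exists_2$-rule adds only role assertions and successor assertions (and $\top$ at a new element), and its reuse guard explicitly forbids any step that would entail $B^*(a^*)$; and each way the $\sqcap$- and $\sqsubseteq$-rules can produce $B^*(a^*)$ already forces $\mathcal{A}\cup\mathcal{T}\models B^*(a^*)$, e.g.\ via $B^*\sqcap A_2\sqsubseteq_\emptyset B^*$, or via a premise $A\sqsubseteq B^*\in\mathcal{T}$, $\exists r.A\sqsubseteq B^*\in\mathcal{T}$, or $A_1\sqcap A_2\sqsubseteq B^*\in\mathcal{T}$ together with the matching assertions already lying in $\mathcal{A}'$. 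This yields $\mathcal{A}\cup\mathcal{T}\models B^*(a^*)$.

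Finally I would close the loop with the reduction recorded at the start of Section~\ref{Section:Counterexamples:smallModel}: with $\mathcal{A}=\{a^*:C\}$ and $D\sqsubseteq B^*$ added to $\mathcal{T}$ for a fresh $B^*$, we have $\mathcal{A}\cup\mathcal{T}\models B^*(a^*)$ iff $\mathcal{T}\models C\sqsubseteq B^*$ iff $\mathcal{T}\models C\sqsubseteq D$. Chaining the three equivalences gives $\mathcal{T}\models C\sqsubseteq D$ from $a^{*\mathcal{I}}\in B^{*\mathcal{I}}$. Combined with Lemma~\ref{lem:sound2} this establishes the full biconditional ``$a^{*\mathcal{I}}\in B^{*\mathcal{I}}$ iff $\mathcal{T}\models C\sqsubseteq D$'', so that the presence of $a^*$ in $B^*$ certifies entailment and, contrapositively, a genuine counterexample ($a^{*\mathcal{I}}\notin B^{*\mathcal{I}}$) certifies non-entailment; the completeness claim is exactly this direction read off the biconditional.

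I expect the main obstacle to be bookkeeping rather than conceptual. The delicate point is to turn the step-local observations of Lemma~\ref{lem:sound2} into a genuine invariant over the \emph{whole} derivation: I must argue that no reachable ABox introduces $B^*(a^*)$ unless $\mathcal{A}\cup\mathcal{T}\models B^*(a^*)$, which requires that each rule preserve the property ``$B^*(a^*)\in\mathcal{A}'$ only if entailed'' inductively, and in particular that the $\exists_2$-rule's guard (which speaks about $B^*$ at both the reused successor and the root $a^*$) blocks every spurious introduction along the run, not merely at one step. Keeping the freshness of $B^*$ under tight control---so that it can enter $\mathcal{A}'$ only through $D\sqsubseteq B^*$ via the $\sqsubseteq$-rule or be propagated by the $\sqcap$/$\sqsubseteq$-rules, and is never guessed---is what makes the syntactic-to-semantic transfer go through cleanly.
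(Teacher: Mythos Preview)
Your argument proves the wrong direction. The printed hypothesis ``$a^{*\mathcal{I}}\in B^{*\mathcal{I}}$'' is a typo; the paper's own proof (and the final theorem) make clear the intended statement is: if $a^{*\mathcal{I}}\notin B^{*\mathcal{I}}$, then $\mathcal{T}\not\models C\sqsubseteq D$. What you actually establish is $a^{*\mathcal{I}}\in B^{*\mathcal{I}}\Rightarrow\mathcal{T}\models C\sqsubseteq D$, which is precisely the contrapositive of Lemma~\ref{lem:sound2}; you even say you quote its case analysis ``essentially verbatim''. Consequently, ``combined with Lemma~\ref{lem:sound2} this establishes the full biconditional'' is circular: both statements are the same implication, so the other half is still missing.

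The paper's completeness argument is the short semantic one, and it needs none of the rule-provenance bookkeeping you set up. By Lemma~\ref{lem:sound}, the generated interpretation $\mathcal{I}$ is a model of $\mathcal{T}\cup\{C(a^*)\}$, so $a^{*\mathcal{I}}\in C^{\mathcal{I}}$; together with the (corrected) hypothesis $a^{*\mathcal{I}}\notin B^{*\mathcal{I}}$, this single model already witnesses $C\not\sqsubseteq_{\mathcal{T}}B^*$, and the reduction $C\sqsubseteq_{\mathcal{T}}B^*\Leftrightarrow C\sqsubseteq_{\mathcal{T}}D$ finishes it. In other words, completeness here is the easy direction: it falls out immediately from ``$\mathcal{I}$ is a model'', not from tracing how $B^*(a^*)$ could enter $\mathcal{A}'$.
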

\begin{proof}
    Let $\Omc$ be an ontology containing the TBox $\Tmc$ and an ABox $\Amc$ initialized as described in Section \ref{Section:Counterexamples:smallModel}.
The proof of Lemma~\ref{fig:model} establishes that $\mathcal{I}$ is a model of $\mathcal{O}$.  Then we have a model of $\mathcal{O}$ in which $a^{\mathcal{I}}$ is in $ C^{\mathcal{I}}$ but not in $ B^{\mathcal{I}}$. Therefore, $C\not\sqsubseteq_\mathcal{T}B^*$, and because $C\sqsubseteq_\mathcal{T}B^*$ iff $C\sqsubseteq_\mathcal{T}D$, $ C\sqsubseteq_\mathcal{T}D$ does not hold.
\end{proof}

\begin{theorem}

For any $\mathcal{EL}_\bot$ concepts $C$ and $D$, normalized $\mathcal{EL}_\bot$ TBox $\mathcal{T}$, $C\sqsubseteq_\mathcal{T} D$ iff ${a^*}^\mathcal{I}\in {B^*}^\mathcal{I}$ in the model $\mathcal{I}$, returned by the algorithm \textsf{Generate-model}($\mathcal{T}$).
\end{theorem}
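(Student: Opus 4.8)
The plan is to assemble the theorem from the lemmas already proved, since the two directions of the biconditional are exactly the contrapositives of Lemmas~\ref{lem:sound2} and~\ref{lem:comp2}. Recall that \textsf{Generate-model}($\mathcal{T}$) starts from the ABox $\mathcal{A}=\{C(a^*)\}$, adds $D\sqsubseteq B^*$ for a fresh concept name $B^*$, normalizes the TBox, exhaustively applies the rules of Table~\ref{t:rules} to obtain a complete ABox $\mathcal{A}'$, and finally reads off the interpretation $\mathcal{I}$ as in Figure~\ref{fig:model}. The key reduction, recorded in Section~\ref{Section:Counterexamples:smallModel}, is that $C\sqsubseteq_\mathcal{T} D$ iff $C\sqsubseteq_{\mathcal{T}\cup\{D\sqsubseteq B^*\}} B^*$, so it is enough to connect $C\sqsubseteq_\mathcal{T} D$ with the membership of $a^{*\mathcal{I}}$ in $B^{*\mathcal{I}}$. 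First I would invoke Lemma~\ref{lem:term} to know that the rule application terminates, so $\mathcal{A}'$ and hence $\mathcal{I}$ are well defined and the statement is meaningful in the first place.

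For the direction $C\sqsubseteq_\mathcal{T} D \Rightarrow a^{*\mathcal{I}}\in B^{*\mathcal{I}}$ I would argue contrapositively: assuming $a^{*\mathcal{I}}\notin B^{*\mathcal{I}}$, Lemma~\ref{lem:sound} gives that $\mathcal{I}$ is a model of $\mathcal{O}=\mathcal{T}\cup\mathcal{A}$, and since $C(a^*)\in\mathcal{A}$ we get $a^{*\mathcal{I}}\in C^{\mathcal{I}}\setminus B^{*\mathcal{I}}$, so $C\not\sqsubseteq_\mathcal{T} B^*$ and therefore $C\not\sqsubseteq_\mathcal{T} D$; this is the content of Lemma~\ref{lem:comp2}. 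For the converse $a^{*\mathcal{I}}\in B^{*\mathcal{I}} \Rightarrow C\sqsubseteq_\mathcal{T} D$ I would again use contraposition and appeal to Lemma~\ref{lem:sound2}: if $C\not\sqsubseteq_\mathcal{T} D$, equivalently $\mathcal{O}\not\models B^*(a^*)$, then the rule-by-rule case analysis in that lemma shows that no expansion rule ever introduces $B^*(a^*)$, so $B^*(a^*)\notin\mathcal{A}'$ and hence $a^{*\mathcal{I}}\notin B^{*\mathcal{I}}$ by the definition of $\mathcal{I}$ in Figure~\ref{fig:model}. Chaining the two implications yields the theorem.

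The only point needing a word of care is the standing consistency assumption in Lemmas~\ref{lem:pres}, \ref{lem:sound} and~\ref{lem:sound2}: I would either assume that $\mathcal{T}\cup\{C(a^*)\}$ (after adding $D\sqsubseteq B^*$ and normalizing) is consistent, or dispatch the inconsistent case separately, where $C\sqsubseteq_\mathcal{T}\bot$ makes $C\sqsubseteq_\mathcal{T} D$ hold trivially and the behaviour of the algorithm has to be checked directly. Apart from that, there is no real obstacle left at the level of the theorem: all the substantive work — the $2^m$ bound on successor chains for termination, the rule-by-rule verification that the constructed interpretation is a model, and the argument that no rule can spuriously derive $B^*(a^*)$ — has already been carried out in Lemmas~\ref{lem:term}, \ref{lem:sound} and~\ref{lem:sound2}. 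Thus the proof of the theorem itself is a short assembly: termination for well-definedness, then Lemmas~\ref{lem:sound2} and~\ref{lem:comp2} for the two directions of the equivalence.
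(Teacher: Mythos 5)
Your proof is correct and takes essentially the same route as the paper, which also just assembles the theorem from Lemmas~\ref{lem:sound2}, \ref{lem:comp2} and~\ref{lem:term}. Your additional remark about the standing consistency assumption is a reasonable point of care that the paper's own one-line proof glosses over, but it does not change the argument.
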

\begin{proof}
Both if and only if direction as well as termination hold, as shown in Lemmas~\ref{lem:sound2},~\ref{lem:comp2} and~\ref{lem:term}.
\end{proof}

\end{document}